\def\bbbr{{\rm I\!R}}
\newcommand{\GP}{\mathcal{GP}}
\newcommand{\bk}{\textbf{k}}
\newcommand{\bw}{\textbf{w}}
\newcommand{\bK}{\textbf{K}}
\newcommand{\latentVector}{\textbf{x}}
\newcommand{\bX}{\textbf{X}}
\newcommand{\bx}{\textbf{x}}
\newcommand{\by}{\textbf{y}}
\newcommand{\I}{\mathcal{I}}
\global\long\def\inputSpace{\mathcal{X}}
\global\long\def\dataScalar{y}
\global\long\def\latentScalar{x}
\global\long\def\latentVector{\mathbf{\latentScalar}}
\global\long\def\latentForce{f}
\definecolor{brown}{rgb}{0.9,0.59,0.078}
\definecolor{ironsulf}{rgb}{0,0.7,.5}
\definecolor{lightpurple}{rgb}{0.156,0,0.245}
\definecolor{colorOne}{rgb}{0, 1, 1}
\definecolor{colorTwo}{rgb}{1, 0, 1}
\definecolor{colorThree}{rgb}{1, 1, 0}
\definecolor{colorTwoThree}{rgb}{1, 0, 0}
\definecolor{colorOneThree}{rgb}{0, 1, 0}
\definecolor{colorOneTwo}{rgb}{0, 0, 1}
\definecolor{colorOne}{rgb}{1, 0, 0}
\definecolor{colorTwo}{rgb}{0, 1, 0}
\definecolor{colorThree}{rgb}{0, 0, 1}
\definecolor{colorTwoThree}{rgb}{0, 1, 1}
\definecolor{colorOneThree}{rgb}{1, 0, 1}
\definecolor{colorOneTwo}{rgb}{1, 1, 0}
\global\long\def\cut#1{}
\global\long\def\detail#1{}
\global\long\def{\setbeamertemplate{footline}{\url{#1} \hfill \insertframenumber} \input{#}}1{{\setbeamertemplate{footline}{\url{#1} \hfill \insertframenumber} \input{#1}}}
\global\long\def{\small\input{#}\vspace{0.5cm}}1{{\small\input{#1}\vspace{0.5cm}}}
\global\long\def\input{#}1{\input{#1}}
\newtheorem{definition}{Definition}
\newtheorem{proof}{Proof}
\newtheorem{proposition}{Proposition}
\newcommand{\erfc}{erfc}
\begin{document}

%

%
\runningauthor{Surname 1, Surname 2, Surname 3, ...., Surname n}

\twocolumn[

\aistatstitle{Batch Bayesian Optimization via Local Penalization}

\aistatsauthor{ Javier Gonz\'alez \And Zhenwen Dai \And Philipp Hennig \And Neil Lawrence}

\aistatsaddress{ University of Sheffield \And University of Sheffield \And Max Planck Institute\\ for Intelligent Systems \And University of Sheffield } ]

\begin{abstract}
The popularity of Bayesian optimization methods for efficient
exploration of parameter spaces has lead to a series of papers
applying Gaussian processes as surrogates in the optimization of
functions. However, most proposed approaches only allow the
exploration of the parameter space to occur
\emph{sequentially}. Often, it is desirable to simultaneously propose
\emph{batches} of parameter values to explore. This is particularly
the case when large parallel processing facilities are
available. These could either be computational or physical facets of
the process being optimized. Batch methods, however, require the modeling
of the interaction between the different evaluations in the batch,
which can be expensive in complex scenarios. We investigate this issue
and propose a highly effective heuristic based on an estimate of the
function's \emph{Lipschitz constant} that captures the most important
aspect of this interaction---local repulsion---at negligible
computational overhead. A \emph{penalized acquisition function} is
used to collect batches of points minimizing the non-parallelizable
computational effort. The resulting algorithm compares very well, in
run-time, with much more elaborate alternatives.
\end{abstract}

\section{Introduction}\label{sec:introduction}

Many problems, such as the configuration of machine learning
algorithms \citep{snoek68} or the experimental design of biological
experiments \citep{gonzalez2014} require the optimization of an
unknown, possibly noisy, function $f$. Bayesian optimization (BO) has
emerged in this scenario as an efficient heuristic to optimize $f$ if
function evaluations are costly and the overall number of evaluations
must be kept low \citep{Jones:1998:EGO:596070.596218}.

The task is to solve the global optimization problem of finding
\begin{equation}\label{eq:problem}
\latentVector_{M} = \arg \max_{\latentVector \in {\inputSpace}} \latentForce(\latentVector).
\end{equation}
We assume that $f$ is a \emph{black-box} from which only perturbed
evaluations of the type $\dataScalar_i = \latentForce(\latentVector_i) + \epsilon_i$, with
$\epsilon_i \sim\mathcal{N}(0,\sigma^2)$, are available. We will
assume that the objective of interest can be described well by a
L-Lipschitz continuous function $f: {\inputSpace} \to \bbbr$ defined on
a compact subset ${\inputSpace} \subseteq \bbbr^d$.

In sequential BO the goal is to make a series of evaluations
$\latentVector_1,\dots,\latentVector_N$ of $f$ such that the maximum of $f$ is
evaluated as quickly as possible. After $n$ points are available, BO
proposes a new location $\latentVector_{n+1}$ using a probabilistic model for
$f$, conditioned on all previous observations $\mathcal{D}_{n} =
\{(\latentVector_i,\dataScalar_i)\} _{i=1}^{n}$. Typically the model is a Gaussian process
(GP) $p(f) = \mathcal{GP}(\mu; k)$ with mean function $\mu$ and
positive-definite covariance function (kernel) $k$ that in this
work we assume is stationary. Under Gaussian likelihoods, the
posterior distribution of $f$ (for a sample of size $n$) is also a GP, with
posterior mean and variance given by
$$\mu_n(\latentVector^{*}) = \bk_{n}(\latentVector^{*})^\top[\bK_{n} + \sigma_n^2 \textbf{I}]^{-1}\by_n$$
and
$$\sigma_n^2(\latentVector^{*})=k(\latentVector^{*},\latentVector^{*})-\bk_{n}(\latentVector^{*})^\top[\bK_{n}+\sigma_n^2 \textbf{I}]^{-1}\bk_{n}(\latentVector^{*}),$$
where $\bK_{n}$ is the matrix such that $(\bK_{n})_{ij}=k(\latentVector_i,\latentVector_j)$,  $\bk_{n}(\latentVector^{*}) = [k(\latentVector_1,\latentVector^{*}),\dots,k(\latentVector_n,\latentVector^{*})]^\top$ \citep{Rasmussen:2005:GPM:1162254} and $\latentVector^{*}$ is the point where the GP is evaluated.  

This posterior is used to form the acquisition function $\alpha(\latentVector;
\I_n)$, where $\I_n$ represents the available data set
$\mathcal{D}_{n}$ and the GP structure (kernel, likelihood and
parameter values) when $n$ data points are available. The next
evaluation is placed at the (numerically estimated) global maximum
$\latentVector_{n+1}$ of this acquisition function. A number of possible
acquisition functions are now available, ranging from fast heuristics
\citep{osborne_bayesian_2010,Jones:1998:EGO:596070.596218} to non-local
entropy-based approaches \citep{HennigSchuler2012,NIPS2014_5324}.

While the goal of Bayesian optimization is to keep the number of
evaluations of $f$ as low as possible, in high-dimensional and or
otherwise complex problems, the number of required evaluations can
still be considerable. Parallel approaches arise as the natural
solution to circumvent the computational bottleneck around these
evaluations of $f$. We focus on cases in which the cost of evaluating
$f$ in a batch of points of size $n_b$ is the same as evaluating $f$
in a single point. Such scenarios appear, for instance, in the
optimization of computer models where several cores are available to
run in parallel, or in wet-lab experiments when the cost of testing
one experimental design is the same as testing a batch of them. In
these settings, the set of available pairs $\{(\latentVector_i,\dataScalar_i)\}_{i=1}^n$
can be augmented with the evaluations of $f$ on batches of data points
$\mathcal{B}_t^{n_b} =\{\latentVector_{t,1},\dots,\latentVector_{t,nb} \}$, for
$t=1,\dots,m$, rather than on single observations. Our goal here is to
define a design rule for such batches
$\mathcal{B}_1^{n_b},\dots,\mathcal{B}_m^{n_b}$. The batch selection
problem can be generalized further, \emph{e.g.}~by adapting the batch
size \citep{DBLP:conf/icml/AzimiJF12} or by collecting batches
asynchronously \citep{ginsbourger:hal-00507632,
  conf/lion/JanusevskisRGG12,snoek68}. For simplicity of exposition
these ideas will not feature further here.

\subsection{Optimal batch design and previous work}

The goal of any batch criterion is to mimic the decisions that would be made under the equivalent (optimal) sequential policy: Consider the choice of selecting $\latentVector_{t,k}$, the $k$-th element of the $t$-th batch. Under a sequential policy, in which the evaluations of $f$ at all locations prior to $\latentVector_{t,k}$ are available, the decision is to take $\latentVector_{t,k}$ as the maximizer of $\alpha(\latentVector;\I_{t,k-1})$. In the batch case, the decision about where to collect $\latentVector_{t,k}$ has to incorporate the uncertainty about the locations  $\latentVector_{t,1},\dots,\latentVector_{t,k-1}$, and the outcomes of the evaluation of $f$ there. Iteratively marginalizing these sources of uncertainty gives
\begin{eqnarray}\label{eq:optimal_batch}
\latentVector_{t,k} &= &\arg \max_{\latentVector \in {\inputSpace}} \int \alpha(\latentVector;\I_{t,k-1})\\ \nonumber
 & &\prod_{j=1}^{k-1}p(y_{t,j}|\latentVector_{t,j},\I_{t,j-1}) p(\latentVector_{t,j}|\I_{t,j-1} )\mbox{d}\latentVector_{t,j} \mbox{d}y_{t,j},\nonumber
\end{eqnarray}
where 
$$p(y_{t,j}|\latentVector_{t,j},\I_{t,j-1})= \mathcal{N} \left(y_{t,j};\mu_n(\latentVector_{t,j}),\sigma_n^2(\latentVector_{t,j}) \right)$$ 
is the predictive distribution of the GP at $\latentVector_{t,j}$ when a total of $n$ points are available and 
$$p(\latentVector_{t,j}|\I_{t,j-1}) = \delta (\latentVector_{t,j} - \arg \max_{\latentVector \in {\inputSpace}} \alpha(\latentVector;\I_{t,j-1}) )$$ reflects the optimization step required to obtain $\latentVector_{t,j}$ after the evaluations of $f$ at previous batch-elements have been marginalized. 

The optimization in Eq.~(\ref{eq:optimal_batch}) is intractable even for small batch-sizes, due to the optimization-marginalization loop required to obtain $\latentVector_{t,k}$. The literature in batch BO has  tried to avoid this computational burden by means of different strategies, most of which involve the explicit use of the predictive distributions $p(y_{t,j}|\latentVector_{t,j},\I_{t,j-1})$, for $j=1,\dots,n_b$. Exploratory approaches \citep{schonlau,journals/corr/abs-1304-5350} search for a reduction in system uncertainty. This is using the property that the variance of $p(y_{t,j}|\latentVector_{t,j},\I_{t,j-1})$ does not depend on the value of the objective there. Other methods use $p(y_{t,j}|\latentVector_{t,j},\I_{t,j-1})$ to generate `fake' observations of the model \citep{DBLP:conf/icml/AzimiJF12,DBLP:journals/corr/abs-1110-3347,Bergstra+al-NIPS2011} and avoid the marginalization step. In statistics, the suitability of the expected improvement utility has been studied for the design of batches \citep{ChevalierG13,Frazier2012}. In contrast to the previous mentioned works, these methods use the joint distribution of $y_{t_1},\dots y_{t,nb}$ to simultaneously optimize elements on the batch \citep{DBLP:conf/nips/AzimiFF10}. These non-greedy strategies are very well founded from a theoretical perspective in practice but tend to scale poorly with the dimension of the problem and the sizes of the batches. Other theoretical properties of batch BO have been studied in the context of Bayesian networks \citep{firstPBO}, multi-armed bandits \citep{DBLP:conf/icml/DesautelsKB12}, and the optimal balance between exploration and exploitation in batch designs \citep{DBLP:conf/pkdd/JalaliAFZ13}. 

\subsection{Goal and Contributions of this work}
Using $p(y_{t,j}|\latentVector_{t,j},\I_{t,j-1})$ to model the interaction between batch elements has a computational overhead of $\mathcal{O}(n^3)$, since the GP needs to be updated after each batch location is selected to jointly optimize all the elements in the batch. The motivation of this work is to develop a \emph{heuristic approximation} of Eq.~(\ref{eq:optimal_batch}) at lower computational cost, while incorporating information about global properties of $f$ from the GP model into the batch design.

Our approach rests on the hypothesis that $f$ is a Lipschitz
continuous function, which is a common assumption in global
optimization \citep{reference/opt/2009}. For easy reference: a
real-valued function $f: {\inputSpace} \to \bbbr$ on a compact subset
${\inputSpace}\subseteq \bbbr^d$ of the $d$-dimensional real vector
space is said to be $L$-Lipschitz if it satisfies
\begin{equation}\label{eq:lipschitz}
|\latentForce(\latentVector_1) - \latentForce(\latentVector_2) | \leq L \|\latentVector_1 -\latentVector_2 \|,\qquad\forall\; \latentVector_1,\latentVector_2\in\mathcal{X}
\end{equation}
where $L$ is a global positive constant, and $\| \cdot\|$ is the $\ell^2$-norm on $\bbbr^d$, a property that has been previously exploited in global optimization \citep{opac-b1088635,opac-b1107684}.

In the context of parallelizing Bayesian optimization, a beneficial aspect of the Lipschitzian assumption is that it naturally allows us to place bounds on how far the optimum of $f$ is from a certain location. See Figure~\ref{figure:exclusion_zones} for details. As explained below, this information can be used to define policies to collect a batch of points multiple steps ahead without evaluating $f$, by mimicking the hypothesized behavior of a sequential policy. The main challenge is that, in practice, the constant $L$ is unknown. In the
literature, this problem has been addressed from different angles \citep{reference/opt/2009}. We explore a new alternative: inferring the Lipschitz constant directly from the Gaussian process model for $f$.

\begin{figure}[t!]
	\centering
  \includegraphics[width=.45\textwidth]{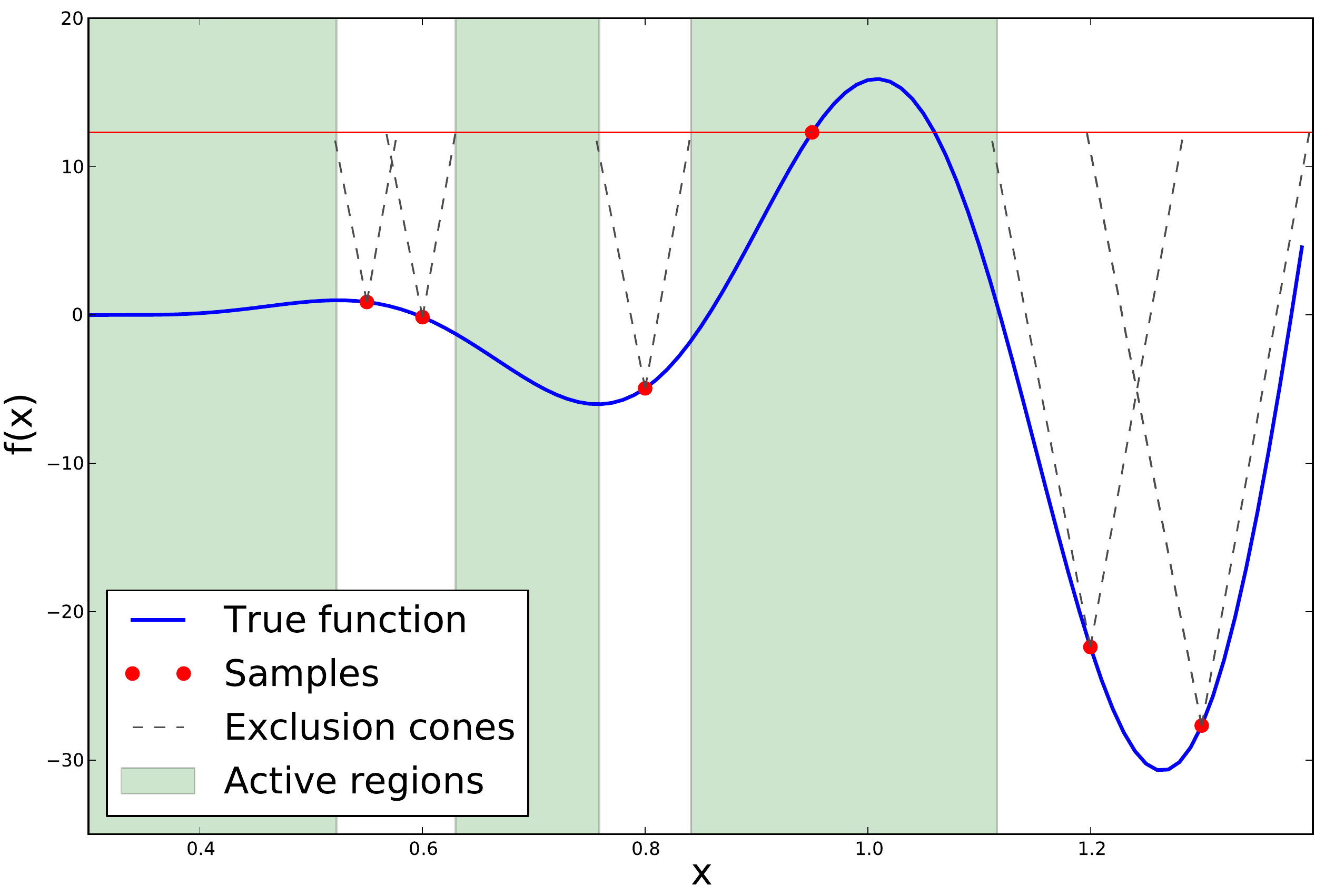}
\caption{  Forrester function $f(x) = (6x-2)^2 \sin(12x-4)$ in the interval $[0.3, 01,4]$. We take 6 evaluations $\bx_1,\dots, \bx_6$ of the function, $M=\max_i f(\bx_i)$ and $L=400$. The exclusion zones for the maximum of $f$ determined by the balls $B_r({\bx_i})$ are shown.\vspace{-2mm}}
	\label{figure:exclusion_zones}
\end{figure}

Our contributions are: (i) A new batch BO heuristic, BBO-LP, that
selects batches of points by an iterative
\emph{maximization-penalization} loop around the the acquisition
function. This leads to efficient parallelization of BO and can be
used with any acquisition function. (ii) A probabilistic framework to
approximately infer the Lipschitz constant of $f$, termed GP-LCA, that
uses the properties of the gradients of the GP. The inferred value of
$L$ is used to improve batch selection.  (iii) A python implementation
of several batch BO methods is published in
conjunction with this work.\footnote{http://sheffieldml.github.io/GPyOpt/}
(iv) Confirmation of the effectiveness of the approach is demonstrated through several
simulated experiments, an algorithm configuration problem, and a real
wet-lab experimental design. In particular, the local penalization
approach performs equal or better than current batch BO methods in
terms of the convergence to the maximum, but shows better performance
in terms of gained \emph{information per second}.

\begin{figure*}[t!]
  \centering
  \includegraphics[width=.95\textwidth]{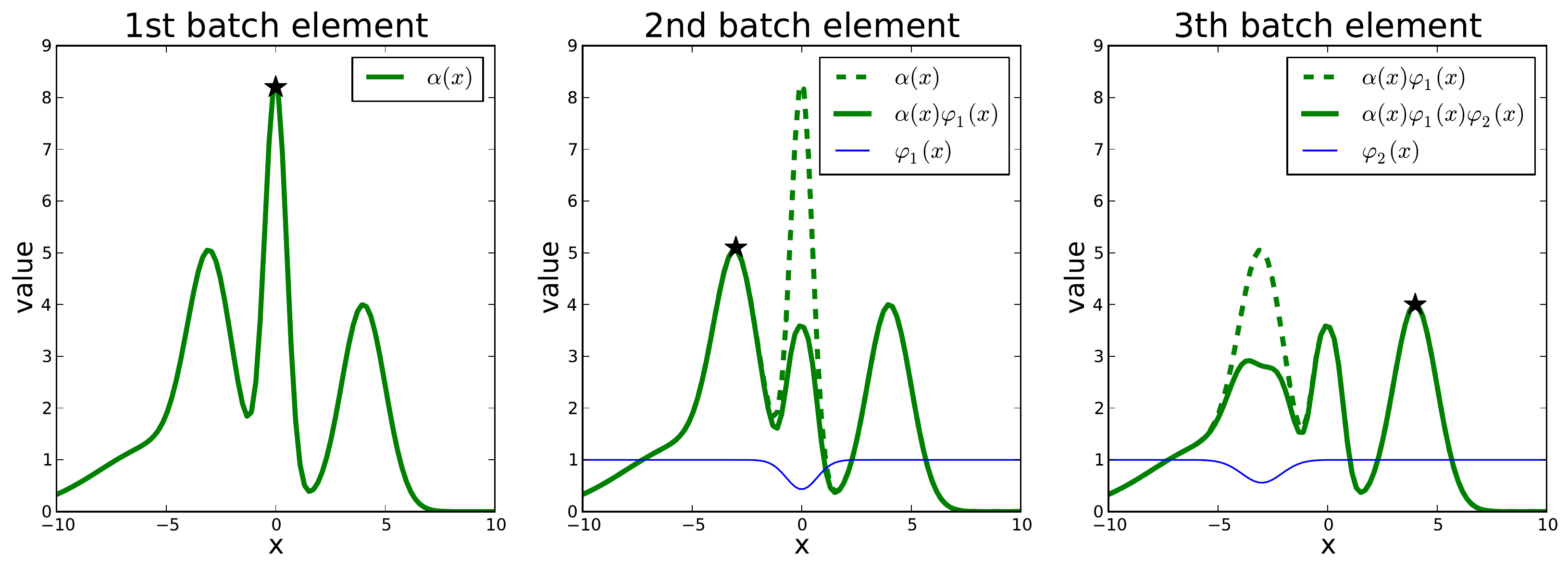}
  \caption{Illustration of three iterations of the \emph{maximization-penalization} loop. The main task of good batch design is to explore the modes of the acquisition function, achieved by iterative maximization (black stars) and penalization (using $\varphi_1(\latentVector), \varphi_2(\latentVector)$) of the acquisition function $\alpha(\latentVector)$.}
  \label{figure:batch}
\end{figure*}

\section{Maximization-Penalization Strategy for Batch Design}


The intuition behind our approach is that for most GP priors in
practical use for BO, the dominant effect of a function evaluation on
the acquisition function is a \emph{local exclusion} around the new
evaluation. This shape of the acquisition function will be modeled
through the Lipschitz properties of $f$, to distribute the elements in
each batch. This should be understood as a heuristic to the shape of
$\alpha(\latentVector;\I_{t,k-1})$ if all previous observations were available,
mimicking the effect a sequential policy. This is especially useful in
cases in which the acquisition function shows \emph{multi-modal
  shape}, a common situation in the first iterations of BO algorithms.
The following definition is helpful for the formalization of the
algorithm:
\begin{definition}
A function $\varphi(\latentVector;\latentVector_j)$, $\latentVector\in \inputSpace$, is a \emph{local penalizer} of a generic acquisition function $\alpha(\latentVector)$ at $\latentVector_j$ if $\varphi(\latentVector;\latentVector_j)$ is differentiable, $0\leq\varphi(\latentVector;\latentVector_j)\leq 1$ and $\varphi(\latentVector;\latentVector_j)$ is an non-decreasing  function in $\|\latentVector-\latentVector_j \|$.
\end{definition}

We propose to replace the \emph{maximization-marginalization} loop in Eq.~\ref{eq:optimal_batch} by a \emph{maximization-penalization} strategy: while the optimization is carried out in a similar fashion, the marginalization step is replaced by the direct penalization of $\alpha(\latentVector;\I_{t,k-1})$ around its most recent maximum, \emph{i.e}, the previous batch element. Figure \ref{figure:batch} gives a graphical illustration. The maximization-penalization strategy selects $\latentVector_{t,k}$ as 
\begin{equation}\label{eq:penalized_acquisition}
\latentVector_{t,k} =\arg \max_{x \in \inputSpace} \left\{g(\alpha(\latentVector; \mathcal{I}_{t,0}))\prod_{j=1}^{k-1}\varphi(\latentVector;\latentVector_{t,j})\right\},
\end{equation}
where $\varphi(\latentVector;\latentVector_{t,j})$ are local local penalizers centered at $\latentVector_{t,j}$ and $g:\bbbr \rightarrow \bbbr^+$ is a differentiable transformation of $\alpha(\latentVector)$ that keeps it strictly positive without changing the location of its extrema. We will use $g(z) = z$ if $\alpha(\latentVector)$ is already positive and the \emph{soft-plus} transformation $g(z)= \ln(1+e^z)$ elsewhere. This does not require re-estimation of the GP model after each location is selected, just a new optimization of the penalized utility. 

The effect of a local penalizer is to smoothly reduce the value of the acquisition function in a neighborhood of $\latentVector_j$. A `good' local penalizer centered at $\latentVector_j$ should reflect the belief about the distance from $\latentVector_j$ to $\latentVector_M$: If we suspect that $\latentVector_M$ is far from $\latentVector_j$, a broad $\varphi(\latentVector;\latentVector_j)$ will discard a large portion of $\inputSpace$ in which we don't need to collect any sample.  On the other hand, if we believe that $\latentVector_M$ and $\latentVector_j$ are close, ideally we want to minimize the penalization of $\alpha(\latentVector)$ and keep collecting samples is a close neighborhood. This local penalization mimics the acquisition function's dynamics under a sequential policy in the following sense: the modes of the acquisition functions correspond to regions in which either $\mu_n(\latentVector)$ or $\sigma^2_n(\latentVector)$ (or both) are large. Evaluating, for instance, where $\sigma_n(\latentVector)$ is large will reduce uncertainty in that region, decreasing $\alpha(\latentVector)$ in a neighborhood. The functions $\varphi(\latentVector;\latentVector_j)$ are surrogates for this neighborhood. 

\subsection{Choosing Local Penalizers $\varphi(\latentVector;\latentVector_j)$}
We now construct penalizing functions $\varphi(\latentVector;\latentVector_j)$ that incorporate into $\alpha(\latentVector)$ the current belief about the distance from the batch locations to $\latentVector_M$. Take $M = \max_{\latentVector \in {\inputSpace}} \latentForce(\latentVector)$, and a valid Lipschitz constant $L$.  Consider the ball
\begin{equation}\label{eq:radious}
B_{r_j}(\latentVector_j)= \{\latentVector\in \inputSpace : \|\latentVector_j-\latentVector \| \leq r_j \}
\end{equation}
where
$$r_j = \frac{M-\latentForce(\latentVector_j)}{L}.$$
To simplify the notation we write $r_j=r(\latentVector_j)$ for the radius of the ball around $\latentVector_j$. If $f$ in (\ref{eq:radious}) is the true optimization objective, then $\latentVector_M \notin B_{r_j}(\latentVector)$---otherwise the Lipschitz condition would be violated. The size of $B_{r_j}(\latentVector_j)$ depends on $L$, $M$ and the value of $f$ at $\latentVector_j$. Both large variability in $f$ (large $L$) and proximity of $\latentForce(\latentVector_j)$ to the optimum $M$ shrink $B_{r_j}(\latentVector_j)$.

In the BO context, under the assumption $\latentForce(\latentVector) \sim \GP(\mu(\latentVector),k(\latentVector,\latentVector'))$, we choose $\varphi(\latentVector;\latentVector_j)$ as the probability that $\latentVector$, any point in $\inputSpace$ that is a potential candidate to be a maximum, does not belong to $B_{r_j}(\latentVector_j)$:
\begin{equation}\label{eq:penalizer}
\varphi(\latentVector;\latentVector_j)  = 1- p (\latentVector  \in B_{r_j}(\latentVector_j) ).
\end{equation}

The following proposition (proof in Supp. Materials~\ref{appen:proof}) shows that this local penalizer can be computed in closed form.

\begin{proposition}
Let $\latentForce(\latentVector)$ be a $\GP$ with posterior mean $\mu_n(\latentVector)$ and posterior variance $\sigma^2_n(\latentVector)$. The function $\varphi(\latentVector;\latentVector_j)$ in Eq. (\ref{eq:penalizer}) is a valid local penalizer of $\alpha(\latentVector)$ at $\latentVector_j$ such that:
$$\varphi(\latentVector;\latentVector_j) = \frac{1}{2}\erfc\left(-z \right)$$
where $z = \frac1{\sqrt{2\sigma_n^2(\latentVector_j)}} \left(L\|\latentVector_j-\latentVector \| - M + \mu_n(\latentVector_j) \right),$
 for erfc the complementary error function, $M = \max_{\latentVector \in {\inputSpace}} \latentForce(\latentVector)$ and $L$ a valid Lipschitz constant.
\end{proposition}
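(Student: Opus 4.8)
The plan is to compute the probability in Eq.~(\ref{eq:penalizer}) in closed form and then verify the three properties in the Definition of a local penalizer. First I would rewrite the ball-membership event: since $L>0$ and $r_j=(M-\latentForce(\latentVector_j))/L$, we have $\latentVector\in B_{r_j}(\latentVector_j)\iff \|\latentVector_j-\latentVector\|\le (M-\latentForce(\latentVector_j))/L \iff \latentForce(\latentVector_j)\le M-L\|\latentVector_j-\latentVector\|$, which is a one-sided condition on the \emph{scalar} random variable $\latentForce(\latentVector_j)$. Here $M$ and $L$ are treated as fixed constants (in the algorithm $M$ is replaced by an estimate of the global maximum and $L$ by the inferred Lipschitz constant), so the only randomness entering $p(\latentVector\in B_{r_j}(\latentVector_j))$ is the GP value at the single input $\latentVector_j$, whose posterior marginal is $\gaussianSamp{\mu_n(\latentVector_j)}{\sigma_n^2(\latentVector_j)}$.

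Next I would evaluate this Gaussian tail. Writing $\Phi$ for the standard normal CDF, $p(\latentVector\in B_{r_j}(\latentVector_j)) = \Phi\big((M-L\|\latentVector_j-\latentVector\|-\mu_n(\latentVector_j))/\sigma_n(\latentVector_j)\big)$ (assuming $\sigma_n^2(\latentVector_j)>0$, which holds for any not-yet-evaluated candidate). Then, using the symmetry $1-\Phi(t)=\Phi(-t)$ and the identity $\Phi(t)=\frac12\erfc(-t/\sqrt2)$,
$$\varphi(\latentVector;\latentVector_j)=1-p(\latentVector\in B_{r_j}(\latentVector_j))=\Phi\!\left(\frac{L\|\latentVector_j-\latentVector\|-M+\mu_n(\latentVector_j)}{\sigma_n(\latentVector_j)}\right)=\frac12\erfc(-z),$$
with exactly the stated $z=\frac1{\sqrt{2\sigma_n^2(\latentVector_j)}}\big(L\|\latentVector_j-\latentVector\|-M+\mu_n(\latentVector_j)\big)$. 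This is the entire computational content of the statement.

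Then I would check the three penalizer axioms. Boundedness $0\le\varphi\le1$ is immediate, since $\varphi$ is one minus a probability (equivalently, $\erfc$ maps $\bbbr$ into $[0,2]$). For monotonicity, note that $\sigma_n(\latentVector_j)$ does not depend on the argument $\latentVector$, so $z$ is affine and strictly increasing in $\|\latentVector_j-\latentVector\|$ with positive slope $L/\sqrt{2\sigma_n^2(\latentVector_j)}$; since $t\mapsto\erfc(-t)$ is increasing, $\varphi$ is non-decreasing (in fact increasing) in $\|\latentVector-\latentVector_j\|$, as required. For differentiability, $\erfc$ is $C^\infty$ and $\latentVector\mapsto z$ is differentiable on $\inputSpace$ away from $\latentVector_j$, so $\varphi$ is differentiable there.

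I do not anticipate a genuine obstacle: the argument is a change of variables plus the normal-CDF/erfc identity and some bookkeeping. The only points deserving a sentence of care are (i) the Euclidean norm $\|\latentVector-\latentVector_j\|$ is not differentiable at the center $\latentVector=\latentVector_j$, which is harmless for the penalized maximization in Eq.~(\ref{eq:penalized_acquisition}); and (ii) the modeling simplification of freezing $M=\max_{\latentVector}\latentForce(\latentVector)$ to a constant (rather than keeping it random and correlated with $\latentForce(\latentVector_j)$), which is precisely what collapses the probability in Eq.~(\ref{eq:penalizer}) to a univariate Gaussian tail.
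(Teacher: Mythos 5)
Your proposal is correct and follows essentially the same route as the paper's proof: both reduce $p(\latentVector\in B_{r_j}(\latentVector_j))$ to a one-sided Gaussian tail for the scalar posterior marginal at $\latentVector_j$ (the paper phrases it via the induced Gaussian law of $r_j$, you via $\latentForce(\latentVector_j)$ directly, which is the same linear change of variables) and then applies the $\Phi$--$\erfc$ identity. Your explicit verification of the three penalizer axioms and the remark about non-differentiability of the norm at $\latentVector=\latentVector_j$ go slightly beyond the paper's proof, which only derives the closed form.
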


The functions $\varphi(\latentVector;\latentVector_j)$ thus create exclusion zones whose size is governed by $L$. If $\mu_n(\latentVector_j)$ is close to $M$, then $\varphi(\latentVector;\latentVector_j)$ will have a smaller and more localized effect on $\alpha(\latentVector)$ (a smaller exclusion area). On the other hand, if $\mu_n(\latentVector_j)$ is far from $M$, $\varphi(\latentVector;\latentVector_j)$  will produce a wider yet less intense correction on $\alpha(\latentVector)$. The value of $L$ also affects the size of the effect of $\varphi(\latentVector;\latentVector_j)$ on $\alpha(\latentVector)$, decreasing it as $L$ increases.

\subsection{Selecting the parameters $L$ and $M$}\label{sec:gp_lca}
The values of $M$ and $L$ are unknown in general. To approximate $M$, one can take $\hat{M}=  \max_{\inputSpace} \mu_n(\latentVector)$ or, to avoid solving this maximization problem, use the even rougher approximation $\hat{M} = \max_i \{\dataScalar_i\}$. 

Regarding the parameter $L$ note that the definition of Lipschitz continuity in Eq.~(\ref{eq:lipschitz}) does not uniquely identify $L$. In the BO penalization context, small but feasible values of $L$ are preferred, because they produce large exclusion zones and thus more efficient search. Given access to the true objective $f$, one can show that  $L_{\nabla} = \max_{\latentVector \in\mathcal{X}}\|{\nabla \latentForce(\latentVector)}\|$ is a valid Lipschitz constant (see~Supp. Material~\ref{appen:approximation_L} for further details). Note that $L_{\nabla}$ is the smallest value of $L$ that satisfies the Lipschitz condition Eq.~\ref{eq:lipschitz} in the limit $\latentVector_1\to \latentVector_2$ in (\ref{eq:lipschitz}).

\begin{algorithm}[t!]
   \caption{Batch Bayesian Optimization with Local Penalization (BBO-LP).}
   \label{alg:parallel_penalization}
\begin{algorithmic}
   \STATE {\bfseries Input:} dataset $\mathcal{D}_{1} = \{\textbf{x}_i, \dataScalar_i\}_{i=1}^{n}$, batch size $n_b$, iteration budget $m$, acquisition transformation $g$.
  \FOR{$t=1$ {\bfseries to} $m$}
   \STATE Fit a GP to $\mathcal{D}_{t}$.
   \STATE Build the acquisition function $\alpha(\latentVector,\mathcal{I}_{t,0})$ using the current GP.
   \STATE $\tilde{\alpha}_{t,0}(\latentVector) \leftarrow g(\alpha(\latentVector,\mathcal{I}_{t,0}))$.
   \STATE $\hat{L}\leftarrow\max_{\inputSpace} \| \mu_{\nabla}(\latentVector) \|$. 
   \FOR{$j=1$ {\bfseries to} $n_b$}
   \STATE \emph{1. M-step:} $\latentVector_{t,j} \leftarrow \arg \max_{x \in \inputSpace} \left\{\tilde{\alpha}_{t,j-1}(\latentVector)\right\}$.
   \STATE \emph{2. P-step:} $\tilde{\alpha}_{t,j}(\latentVector) \leftarrow \tilde{\alpha}_{t,0}(\latentVector)\prod_{j=1}^{k}\varphi(\latentVector; \latentVector_{t,j},\hat{L})$.
   \ENDFOR
   \STATE $\mathcal{B}_t^{n_b} \leftarrow \{\latentVector_{t,1},\dots,\latentVector_{t,n_b} \}$.
   \STATE $y_{t,1},\dots,y_{t,n_b} \leftarrow$ Parallel evaluations of $f$ at $\mathcal{B}_t^{n_b}$.  
   \STATE $\mathcal{D}_{t+1} \leftarrow \mathcal{D}_{t} \cup  \{( \latentVector_{t,j},y_{t,j})\}_{j=1}^{n_b}$. 
   \ENDFOR
   \STATE Fit GP to $\mathcal{D}_{n}$.
   \STATE \textbf{Returns}: $\hat{\latentVector}_M = \arg \max_{x \in \inputSpace} \left\{\mu(\latentVector)\right\}$.
\end{algorithmic}
\end{algorithm}

We now construct an approximation for $L_{\nabla}$. Assuming that $f$ is a draw from a GP with a (at least) twice differentiable kernel $k$, the gradient of $f$ at $\latentVector^*$ is distributed as a multivariate Gaussian $\nabla \latentForce(\latentVector^*) |\bX, \by,\latentVector^*  \sim \mathcal{N}(\mu_{\nabla}(\latentVector^*),\Sigma_{\nabla}^2(\latentVector^*)) $
with mean vector 
$$\mu_{\nabla}(\latentVector^*) =\partial \bK_{n,*}(\latentVector^*)\tilde{\bK}_{n}^{-1}\by,$$
and covariance matrix 
$$\Sigma^2_{\nabla}(\latentVector^*)=\partial^2 \bK_{*,*}-\partial\bK_{n,*}(\latentVector^*)\tilde{\bK}_{n}^{-1}\partial\bK_{n,*}(\latentVector^*)^\top$$ for  $\tilde{\bK}_n = \bK_{n} + \sigma^2 \textbf{I}$ and where, for $i,j =1,\dots,d$ and $l =1,\dots,n$,
$$(\partial  \bK_{n,*})_{i,l}=  \frac{\partial \bk_N(\latentVector^*) }{\partial x^{(i)}},\quad (\partial^2  \bK_{*,*})_{ij} =\frac{\partial^2 k(\latentVector^*,\latentVector^*) }{\partial x^{(i)}\partial x^{(j)}}.$$
We choose 
$$\hat{L}_{GP-LCA} = \max_{\inputSpace} \| \mu_{\nabla}(\latentVector^*) \| $$
and call this the Gaussian Process Lipschitz Constant Approximation criterion (GP-LCA). Note that this definition of $\hat{L}_{GP-LCA}$ ignores the variance of the gradient, which could be used to identify candidate points to improve the approximation of $L_{\nabla}$ in a Bayesian optimization fashion. The supplement contains further experiments supporting the quality of this approximation. See Algorithm \ref{alg:parallel_penalization} for a description of all the steps described in this section. 

\subsection{Heteroscedastic scenarios}
The use of an unique value of $L$ assumes that the function to optimize is Lipschitz homocedastic. Although this is a typical hypothesis for most BO methods, recent works have pointed out that some real problems do not satisfy this condition \citep{DBLP:journals/corr/AssaelWF14}. It is not the goal of this work to analyze this case further but, interestingly, the method proposed here can be extended to non-Lipschitz cases by replacing $L$ in the penalizers $\varphi(\latentVector; \latentVector_{j},\hat{L})$ by a local values of $L$. For instance, a possible approach would be to replace the local penalizers by $\varphi(\latentVector;\latentVector_j,\hat{L}_j)$ where $\hat{L}_j = \| \mu_{\nabla}(\latentVector_j) \|$.

\subsection{Optimizing the penalized acquisition function}

The optimization of (\ref{eq:penalized_acquisition}) can be performed most easily by any gradient-based method in the log space because there, the gradients have an additive form. More formally, when the transformation used to make the acquisition positive is the \emph{soft-plus} function, $g(z) = \ln(1+e^z)$, the gradient of $\log \tilde{\alpha}_{t,k}(\bx;\mathcal{I}_{t,0})$, being $\tilde{\alpha}_{t,k}(\bx;\mathcal{I}_{t,0})$ the penalized acquisition in Eq.~(\ref{eq:penalized_acquisition}), is:
 \begin{eqnarray}\nonumber
 \nabla \ln  \tilde{\alpha}_{t,k}(\bx, \mathcal{I}_{t,0}) & =& \frac1{\ln(1+e^ {\alpha(\bx; \mathcal{I}_{t,0})})} \frac{e^{\alpha(\bx; \mathcal{I}_{t,0})}} {1+e^{\alpha(\bx; \mathcal{I}_{t,0})}} \cdot\\ \nonumber
 & &  \nabla \alpha(\bx; \mathcal{I}_{t,0}) + \sum_{j=1}^{k-1}\varphi(\bx;\bx_{t,j})^{-1}  \cdot\\ \nonumber
  & & \nabla \varphi(\bx;\bx_{t,j}),\nonumber
 \end{eqnarray}
where $\nabla \alpha(\bx; \mathcal{I}_{t,0})$ is the (assumed known) gradient of the original acquisition function and $\nabla \varphi(\bx;\bx_{t,j})$ are the gradients of the local penalizers
$$\nabla \varphi(\bx;\bx_{t,j}) = \frac{ e^{-z^2}}{\sqrt{2\pi \sigma_n^2(\bx_j)}}\frac{2L}{\|\bx_j-\bx\|}(\bx_j-\bx),$$
See Supp. Materials \ref{appen:optimization_acquisition} for details.

\section{Experimental Section}

\begin{table*}[t]\label{table:UCB}
\begin{center}
\begin{tabular}{cccccccc}
\hline
$d$ & $n_b$ &EI & UCB & Rand-EI & Rand-UCB & SM-UCB & B-UCB\\\hline\hline
\multirow{3}{*}{2} &5&\multirow{3}{*}{0.31$\pm$0.03}&\multirow{3}{*}{0.32$\pm$0.06}&0.32$\pm$0.05&\textbf{0.31$\pm$0.05}&1.86$\pm$1.06&0.56$\pm$0.03\\
&10&&&0.65$\pm$0.32&0.79$\pm$0.42&4.40$\pm$2.97&\textbf{0.59$\pm$0.00}\\
&20&&&0.67$\pm$0.31&0.75$\pm$0.32& - &0.57$\pm$0.01\\
\hline
\multirow{3}{*}{5} &5&\multirow{3}{*}{8.84$\pm$3.69}&\multirow{3}{*}{11.89$\pm$9.44}&9.19$\pm$5.32&10.59$\pm$5.04&137.2$\pm$113.0&\textbf{6.01$\pm$0.00}\\
&10&&&1.74$\pm$1.47&2.20$\pm$1.85&108.7$\pm$74.38&3.77$\pm$0.00\\
&20&&&2.18$\pm$2.30&2.76$\pm$3.06& - &2.53$\pm$0.00\\
\hline
\multirow{3}{*}{10} &5&\multirow{3}{*}{559.1$\pm$1014}&\multirow{3}{*}{1463$\pm$1803}&\textbf{690.5$\pm$947.5}&1825$\pm$2149&9e+04$\pm$7e+04&2098$\pm$0.00\\
&10&&&200.9$\pm$455.9&1149$\pm$1830&9e+04$\pm$1e+05&857.8$\pm$0.00\\
&20&&&639.4$\pm$1204&385.9$\pm$642.9& - &1656$\pm$0.00\\
\hline
\\\hline$d$ & $n_b$ &PE-UCB & Pred-EI & Pred-UCB & qEI & LP-EI & LP-UCB\\\hline\hline
\multirow{3}{*}{2} &5&0.99$\pm$0.74&0.41$\pm$0.15&0.45$\pm$0.16&1.53$\pm$0.86&0.35$\pm$0.11&\textbf{0.31$\pm$0.06}\\
&10&0.66$\pm$0.29&1.16$\pm$0.70&1.26$\pm$0.81&3.82$\pm$2.09&0.66$\pm$0.48&0.69$\pm$0.51\\
&20&0.75$\pm$0.44&1.28$\pm$0.93&1.34$\pm$0.77& - &\textbf{0.50$\pm$0.21}&0.58$\pm$0.21\\
\hline
\multirow{3}{*}{5} &5&123.5$\pm$81.43&10.43$\pm$4.88&11.77$\pm$9.44&15.70$\pm$8.90&11.85$\pm$5.68&10.85$\pm$8.08\\
&10&120.8$\pm$78.56&9.58$\pm$7.85&11.66$\pm$11.48&17.69$\pm$9.04&3.88$\pm$4.15&\textbf{1.88$\pm$2.46}\\
&20&98.60$\pm$82.60&8.58$\pm$8.13&10.86$\pm$10.89&- &6.53$\pm$4.12&\textbf{1.44$\pm$1.93}\\
\hline
\multirow{3}{*}{10} &5&2e+05$\pm$2e+05&793.0$\pm$1226&1412$\pm$3032& - &1881$\pm$1176&1194$\pm$1428\\
&10&6e+04$\pm$8e+04&442.6$\pm$717.9&1725$\pm$3205& - &1042$\pm$1562&\textbf{100.4$\pm$338.7}\\
&20&5e+04$\pm$4e+04&1091$\pm$1724&2231$\pm$3110& - &1249$\pm$1570& \textbf{20.75$\pm$50.12}\\
\hline
\end{tabular}
\caption{Results for the gSobol function across different dimensions, batch sizes and methods. For each algorithm, the mean and standard deviation are shown. Best results among the batch methods are highlighted in bold. `-' represents that the method could not complete the first iteration within the time budget. The value of $f$ at the minimum is always zero. EI and UCB represent the Expected improvement and the upper confidence bound acquisitions. Rand stands for the random batch collection. SM is the simulating and matching approach. Pred is the predictive approach and LP the local penalization method presented in this work. qEI is the multi-point expected improvement. }
\end{center}
\end{table*}

This section compares the performance of Algorithm \ref{alg:parallel_penalization} with the state-of-the-art methods for batch BO. We label the different methods by means of the batch design type followed by the acquisition used: {Rand} is used when the first element in the batch is collected maximizing the acquisition and the remaining ones randomly, {B} and {PE} denote the exploratory approaches in \citep{schonlau} and \citep{journals/corr/abs-1304-5350}, {Pred} is used in cases when the model is used to generate `fake' batch observations as in \citep{DBLP:conf/icml/AzimiJF12}, {SM} identifies the simulating and matching method  \citep{DBLP:conf/nips/AzimiFF10} and {LP} stands for our local penalization method. The multi-point expected improvement \citep{ChevalierG13} is denoted by {qEI}. Two acquisition functions are used: the expected improvement (EI) defined as $\alpha_{EI}(\latentVector;\I_n) = \left(u \Phi(u) + \phi (u) \right)\sigma_n(\latentVector),$
where $u = (\mu_n(\latentVector) - y_{\min})/\sigma_n(\latentVector)$ and $\Phi(\cdot)$,  $\phi(\cdot)$ are the standard Gaussian distribution and density functions respectively and $y_{\min}$ is the best current location and the Upper Confidence Bound (UCB) defined as $\alpha_{UCB}(\latentVector;\I_n) = \mu_n(\latentVector) +\kappa \sigma_n(\latentVector)$, with $\kappa\geq 0$. The batch methods that can be used with an arbitrary acquisition function are tested using both, with the exception of the SM whose implementation is only available with the UCB. When used in a sequential setting (for baseline reference) the EI and UCB are referred by their acronyms. In total, we use 2 sequential and 10 batch methods. To run the B, PE, SM, methods, we use the available \textsc{Matlab} code.\footnote{http://econtal.perso.math.cnrs.fr/software/. Note that an alternative implementation of the GP-B-UCB code is available at http://www.its.caltech.edu/~tadesaut/GPBUCBCode/ but we used the former one for consistency in the comparisons.} The implementation of these methods optimize $f$ by searching its optimum in a fine grid, which is an advantage computationally but a drawback in terms of precision. The qEI was taken from the R-package DiceOptim.\footnote{http://cran.r-project.org/web/packages/DiceOptim} Unless specified otherwise, the default implemented settings of all the previous methods are used. 

We perform: (i) a simulation in which the performance of the algorithms is compared for a fixed time budget across different problem dimensions, batch sizes and acquisition functions and (ii) a comparison of the \emph{gained information per second rate} in three objective functions with different evaluation costs. We always minimize the objective, minimizing $-f$ in examples in which the goal is to find maximum of $f$. In all the experiments the exponentiated quadratic (EQ) covariance $k(\latentVector,\latentVector') = \theta \exp(- \gamma \|\latentVector-\latentVector'\|^2 )$, $\theta,\gamma >0$ is used in the GP, whose parameters are optimized by maximizing the marginal likelihood from the best of 10 random initializations. The results are taken over 20 replicates with different initial values. All the simulations were done on Amazon EC2 servers with Intel Xeon E5-2666 processors and 2 virtual CPUs except the SVR tuning with 16 virtual CPUs. 
\subsection{Comparisons in terms of the dimension, batch size and acquisition function}\label{sec:gSobol}

 \begin{figure*}[t!]
  \begin{center}\label{figure:comparisons}
   \mbox{
    \subfigure[Results for the Cosines function - batch iterations.]{\includegraphics[width=0.44\linewidth]
{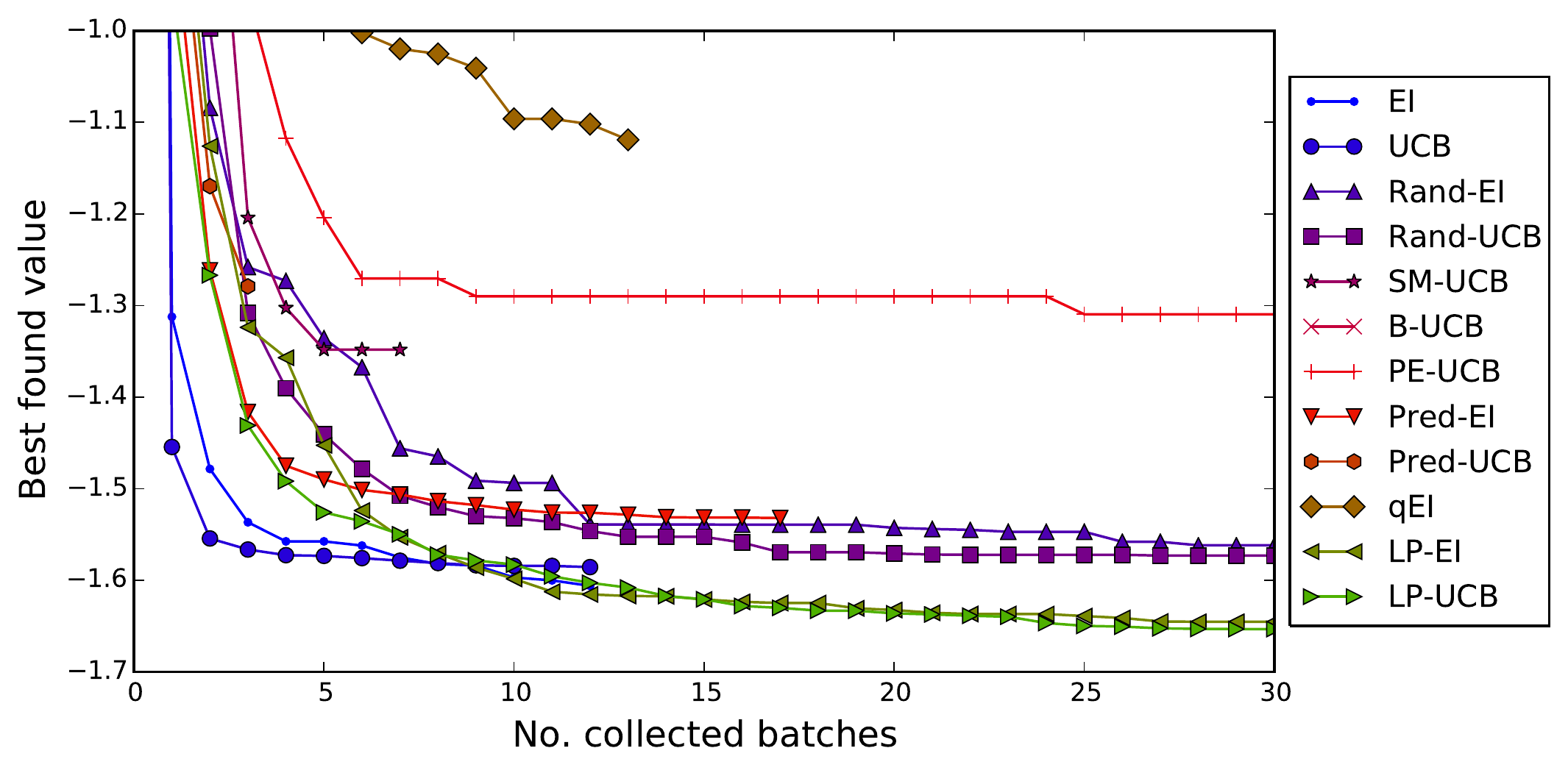}} \quad
    \subfigure[Results for the Cosines function - running time.]{\includegraphics[width=0.44\linewidth]
{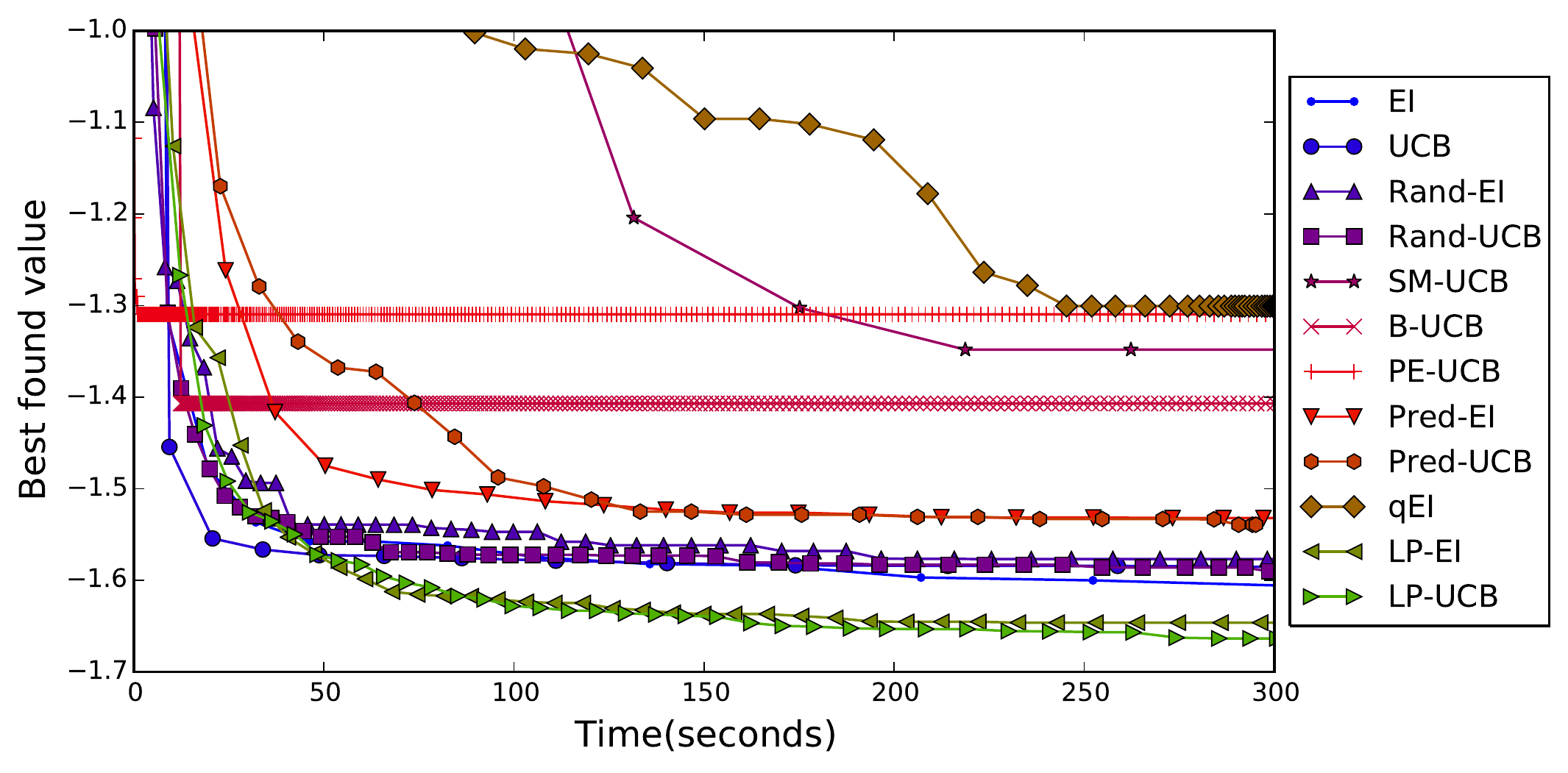}} \quad
 }
   \mbox{
    \subfigure[Results for the wet-lab function - batch iterations.]{\includegraphics[width=0.44\linewidth]
{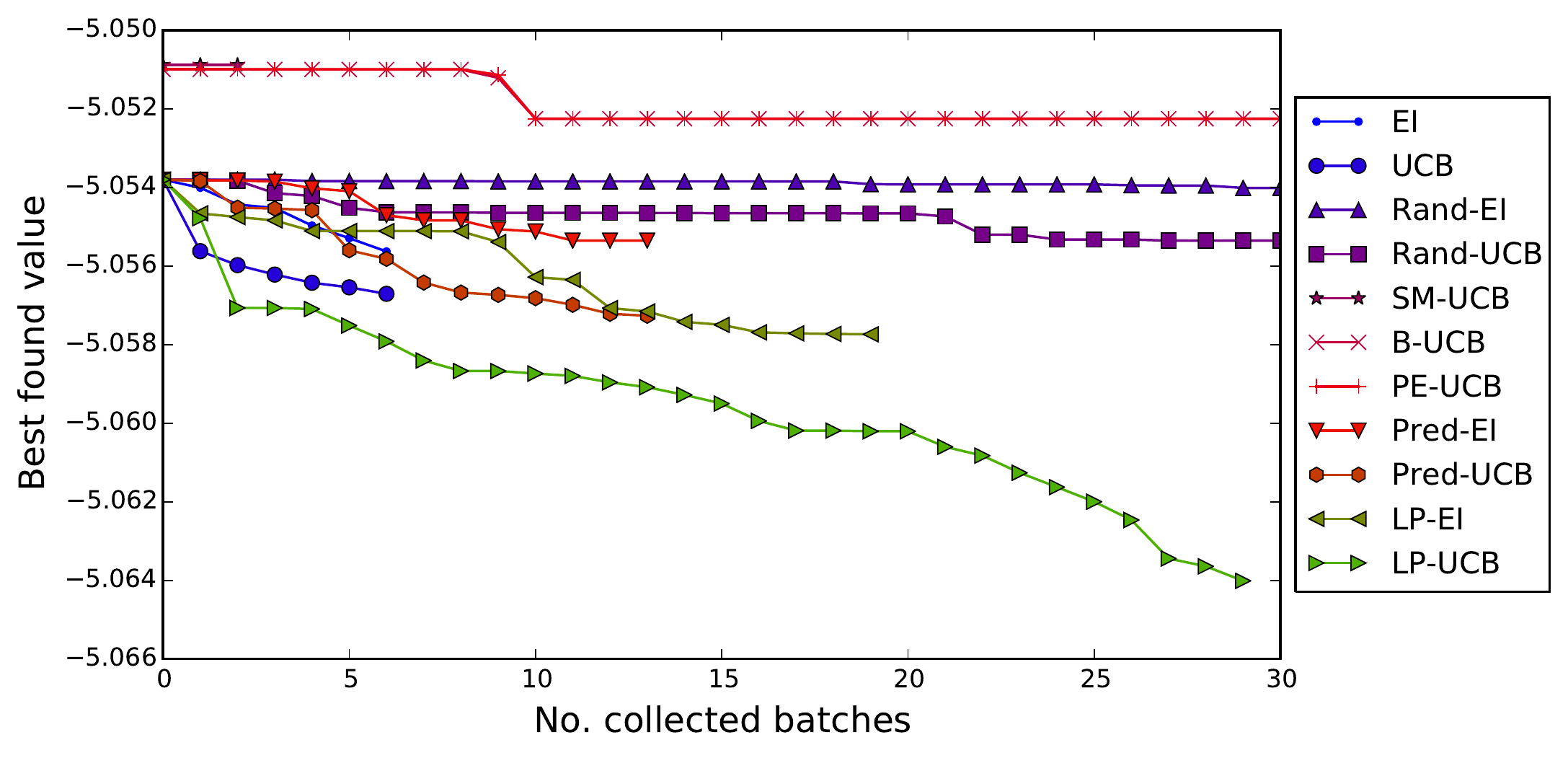}} \quad
    \subfigure[Results for the wet-lab function - running time.]{\includegraphics[width=0.44\linewidth]
{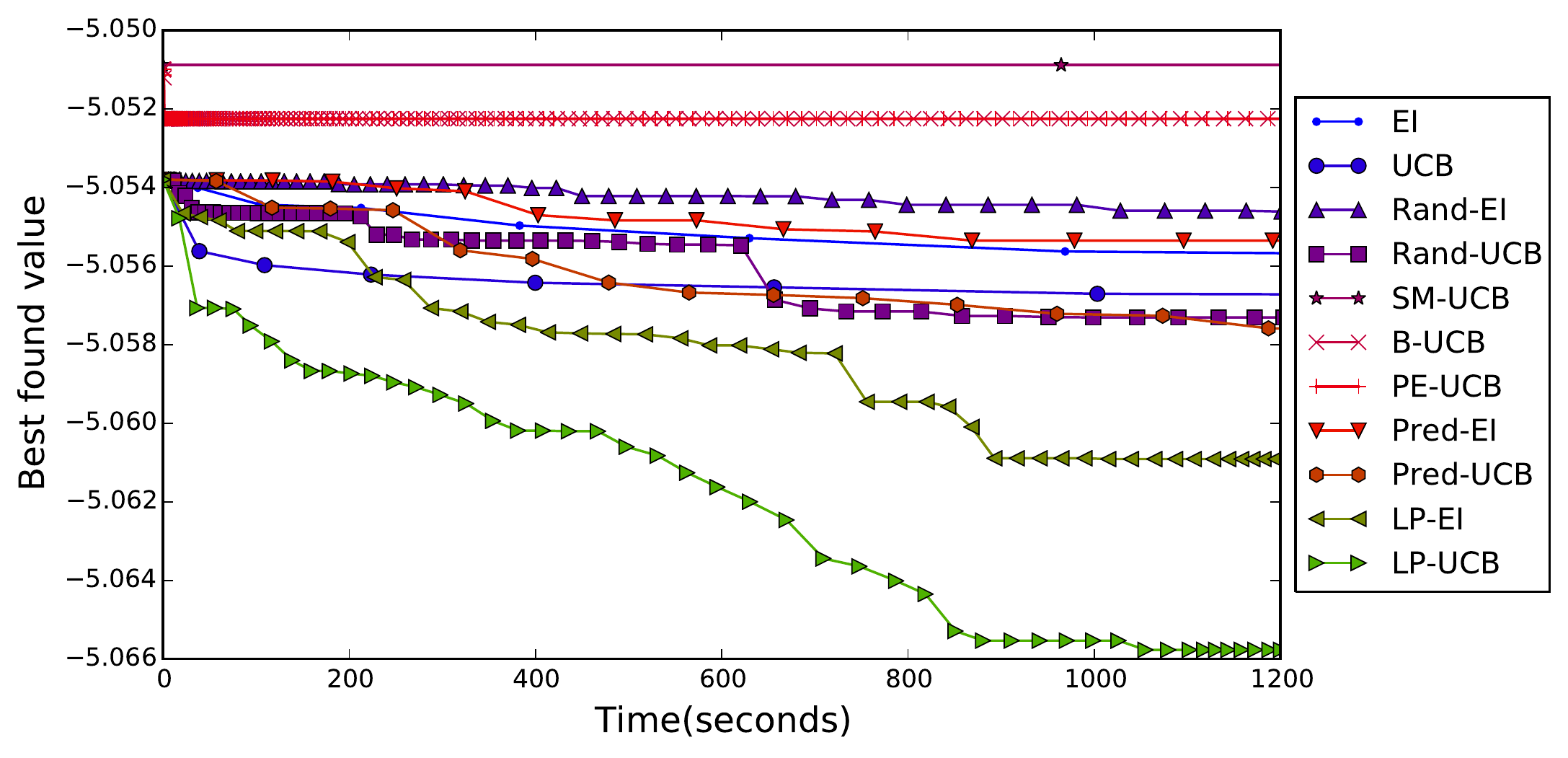}} \quad
  }
   \mbox{
      \subfigure[Results for the SVR function - batch iterations.]{\includegraphics[width=0.44\linewidth]
{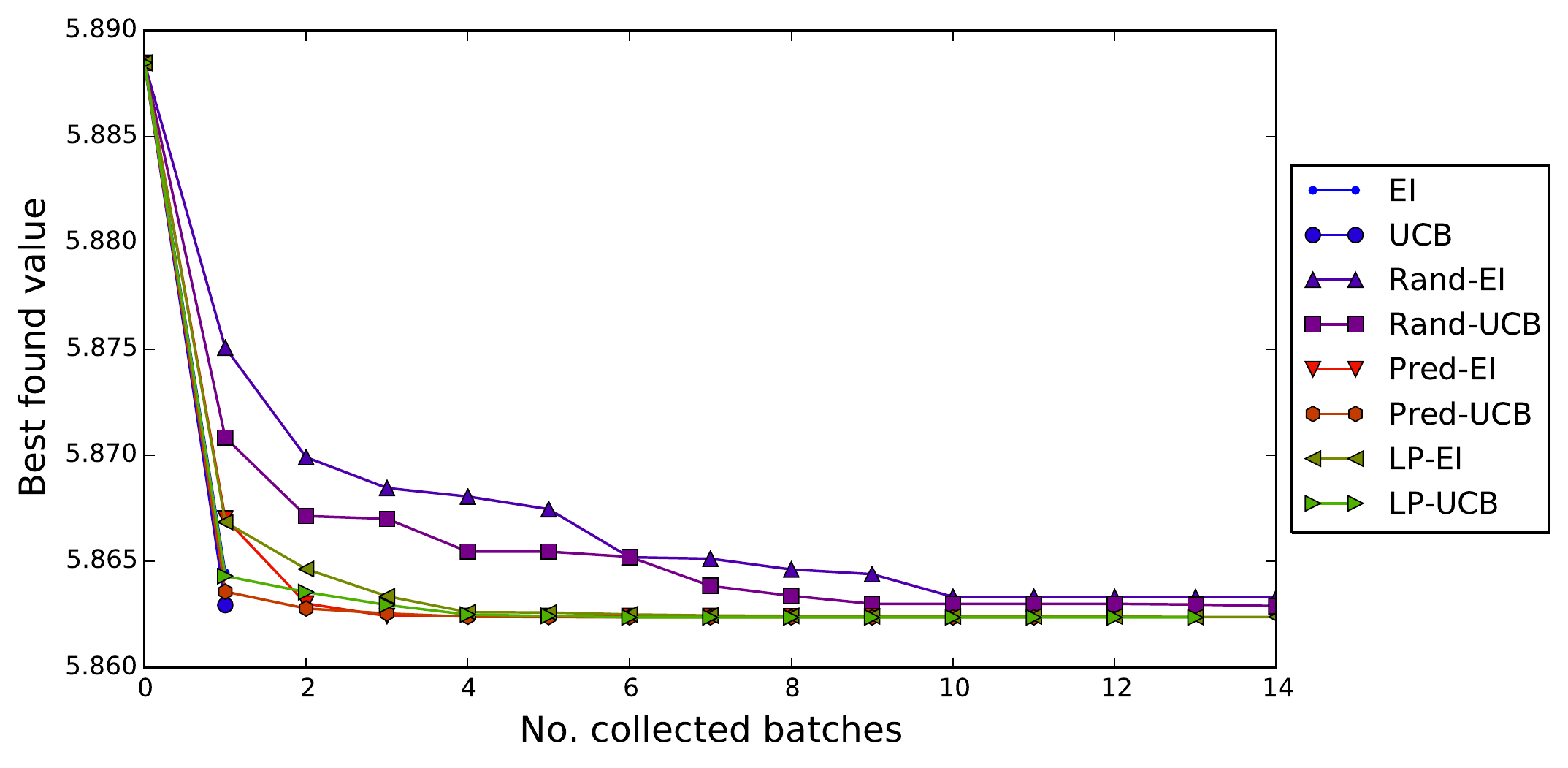}} \quad
     \subfigure[Results for the SVR function - running time.]{\includegraphics[width=0.44\linewidth]
{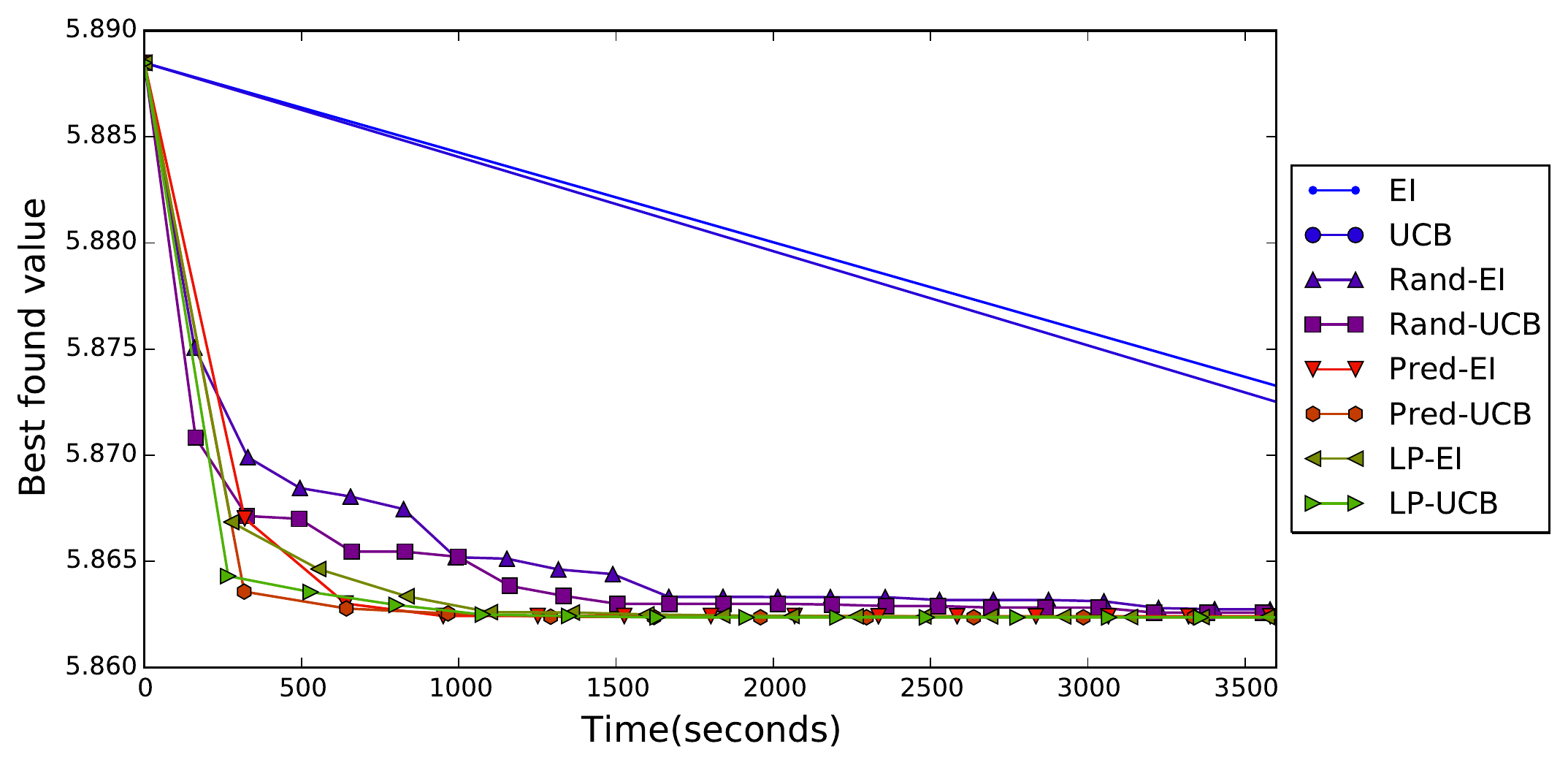}}    }
 \end{center}
\caption{Results for the three test functions described in Section \ref{sec:simulated_functions}. Geometric figures on top of the lines represent the moments in which the batches are evaluated. See caption of Table \ref{table:UCB} for details on the acronyms.}\label{figure:example}
\end{figure*}

We consider the gSobol function (see Supp. Materials \ref{sec:functions}) to compare the above mentioned methods across dimensions $d=2,5,10$ and batch sizes, $n_b = 5,10, 20$. For methods using the UCB, $\kappa$ was fixed to 2, which allows us to compare the different batch designs using the same acquisition function. For dimension 2, 5 and 10, we use a time budget of 1, 5 and 10 mins. respectively. Table \ref{table:UCB}  shows the averaged best value found by each algorithm for all the iterations completed within the time limit. In general, the batch methods using the UCB show a better performance that methods using the EI, especially in dimensions 5 and 10. The overall best technique is the LP-UCB, that achieves the best results in 5 of the 9 cases. It is also notable that it exhibits fairly small standard deviations compared with the rest of the methods and it is coherent accumulating information about the optimum of $f$ in terms of the batch size: as $n_b$ increases the results are consistently better. In dimension 2 and batch size 5, the LP-EI is the best method. There are three cases in which the LP batch designs are not the most competitive (although still providing good results). Exploratory approaches works well in low dimensional cases, being the B-UCB the best method in two scenarios.

\subsection{Comparisons in terms of the cost to evaluate the objective}\label{sec:simulated_functions}

We choose three scenarios to compare the algorithms in terms of the running time. The examples correspond to three functions that are cheap, moderate and expensive to evaluate. More specifically, the first experiment uses a function (Cosines) that is inexpensive to evaluate but quite multi-modal. The second experiment is motivated by a wet-lab experimental design. We work with a surface that emulates the performance of mammalian cells in protein production given different gene designs. The function has dimension 71 and is is moderately expensive to evaluate since it corresponds to the predictive mean of a GP trained over 1,500 data instances. The qEI was not used in this experiment due to the huge computational effort required to jointly optimize the batches in dimension 71. The third experiment involves the tuning of the three parameters of a support vector regression (SVR) \citep{Drucker1997Support} in a example with 45730 instances and 9 continuous  attributes \citep{Bache+Lichman:2013}. The objective function is the cross-validation error of the model, which is expensive to evaluate due to the amount of data used. See Supp. Materials \ref{sec:functions} for further details. We take a batch size of $n_b=5$ for the Cosines function and $n_b=10$ for the wet-lab and SVR experiments. We compare the averaged best found results in terms of the number of collected batches and the wall-clock time. In the last experiment we use the SVR implementation available in scikit--learn\footnote{http://scikit-learn.org/stable/index.html.} and only the methods implemented in python are used (EI, UCB, Rand-EI, Rand-UCB, Pred-EI, Pred-UCB, LP-EI and LP-UCB).

In the Cosines experiment both the sequential EI and UCB policies achieve the best results during the first 10 iterations of the algorithms (2 full batches). As the algorithms progress, however, a significant improvement is observed by the LP-EI and LP-UCB methods in  terms of the number iterations and in terms of the wall-clock-time. When many points are collected, the update of the GP is more expensive and a good batch design is able to explore regions that the sequential method cannot. The rest of the batch methods, however, are not able to do this exploration efficiently, which leads to poorer results. Similar results are obtained for the wet-lab experiment. The LP-EI and LP-UCB are again the most competitive techniques improving the rest of the batch methods and the sequential policies. The differences are even more significant in this scenario.  Since $f$ is now more expensive to evaluate, the parallelization of the evaluations makes the search much more efficient, specially for the LP-UCB method. Regarding the last experiment, the cost of evaluating the function dominates the cost of designing the batch. In this case the performance of the different batch methods is comparable but significantly better than the sequential policies due to the parallel evaluations of $f$. The results for the three functions  are coherent with those observed in Section \ref{sec:gSobol} showing that the BBP-LP methods is overall the most efficient method for batches collection in BO.

\section{Discussion}
We have investigated a new heuristic for batch BO, BBO-LP, that significantly reduces the computational burden of non-parallelizable tasks. The resulting method can be used with any acquisition function and it is able to make fast and appropriate decisions about the locations where $f$ should be evaluated. When the batch evaluations of $f$ are parallelizable this is an important advantage, meaning that they don't lead to considerable additional computational overhead. We have found other interesting results. In simple scenarios, batch policies based on random exploration work reasonably well in terms of the information gained per second. When the complexity of the problem increases, however, methods that make use of some information about $f$  improve the random policy. In particular, the approach here proposed makes use of the Lipschitz continuity of $f$ to model the interaction between the elements in the batch. In spirit, this is similar to use the GP to predict the evaluations of $f$ but, in practice, is much more efficient because it avoids the re-computation of the GP after every point is selected. The limitations of this approach are, however, determined by the ability to learn correctly a small enough, and valid, Lipschitz constant for $f$.

 One could also wonder whether it is necessary to require that sample paths from the GP measure on $f$ should be Lipschitz-continuous themselves. This would severely restrict the applicability of this notion, because the relationship between regularity of the kernel and the sample paths is complicated. Even if the kernel is Lipschitz-continuous, sample paths may not be Lipschitz \citep{Adler1981}. However, our approach only tries to model the effect of evaluations on the BO objective, not the GP probability measure itself. Many BO objectives, in particular the EI and UCB, are smooth functions  of only the sufficient statistics (mean and covariance function) of the GP posterior. Both the posterior mean and covariance function are members of the Reproducing kernel Hilbert space induced by the kernel (i.e. they are weighted sums of kernel functions). Thus, if the kernel is Lipschitz, so is the acquisition function, even if the GP measure itself has non-Lipschitz sample paths. Finally, note that our local repulsion criterion naturally suggests a Latin square design for the case when no functional values have been been acquired. The latin square design is widely suggested for this domain \citep{Jones:1998:EGO:596070.596218}.



\newpage
\bibliographystyle{plainnat}
\bibliography{bobatch_nips,../../../bib/lawrence,../../../bib/other,../../../bib/zbooks}

\newpage
$\mbox{}$
\newpage

\appendix

\section{Proof of Proposition 1}\label{appen:proof}

We compute the explicit form of the penalization functions $\varphi(\bx;\bx_j)$. The distribution of $r_j$ is Gaussian with mean $(M-\mu_n(\bx_j))/L$ and variance $ \sigma_n^2(\bx_j)/L^2$ by the properties of $f(\bx_j)$.  Then we obtain that
\begin{eqnarray}\nonumber
\varphi(\bx;\bx_j)  & = &  1- p (\bx  \in B_{r_j}(\bx_j) ) \\ \nonumber
& =  & 1- p(r_j \geq \| \bx_j - \bx\|_p)\\ \nonumber
& =  & p(r_j \leq \| \bx_j - \bx\|_p)\\ \nonumber
& =  & p\left(\mathcal{N}(0,1) \leq \frac{L \|\bx_j-\bx\|_p-M+ \mu_n(\bx_j)}{\sigma_n(\bx_j)} \right)\\ \nonumber
& = & \Phi\left(\frac{L \|\bx_j-\bx\|_p-M+ \mu_n(\bx_j)}{\sigma_n(\bx_j)} \right)\\ \nonumber
& = &  \frac{1}{2} \mbox{erfc} \left(-z \right) \\ \nonumber
\end{eqnarray}
for
$$z = \frac1{\sqrt{2\sigma_n^2(\bx_j)}} \left(L\|\bx_j-\bx \| - M + \mu_n(\bx_j) \right).$$

\section{Optimization of the penalized acquisition function}\label{appen:optimization_acquisition}

Under the proposed local penalization method, to select the $k$-th element of the $t$-th batch  requires the optimization of the function
$$\tilde{\alpha}_{t,k}(\bx;\mathcal{I}_{t,0}) =g(\alpha(\bx; \mathcal{I}_{t,0}))\prod_{j=1}^{k-1}\varphi(\bx;\bx_{t,j}), $$
which can be done by any gradient descend method as follows. We fist map the problem into  the natural log space by observing that
$$\arg \max_{x \in \mathcal{X}} \left\{\tilde{\alpha}_{t,k}(\bx, \mathcal{I}_{t,0})\right\} = \arg \max_{x \in \mathcal{X}} \left\{\ln \tilde{\alpha}_{t,k}(\bx, \mathcal{I}_{t,0})\right\}. $$
Applying the properties of the logarithms we transform the problem into the maximization of
$$ \ln  \tilde{\alpha}_{t,k}(\bx, \mathcal{I}_{t,0}) = \ln \left[ g(\alpha(\bx; \mathcal{I}_{t,0})) \right] + \sum_{j=1}^{k-1} \ln \left[\varphi(\bx;\bx_{t,j})\right].$$
The gradient with respect to $\bx$ is now easy to calculate since the problem is in additive form. First, note that the gradients of the local penalizers $\nabla \varphi(\bx;\bx_{t,j})$ are
\begin{equation}\nonumber
\nabla \varphi(\bx;\bx_{t,j}) = \frac{ e^{-z^2}}{\sqrt{2\pi \sigma_n^2(\bx_j)}}\frac{2L}{\|\bx_j-\bx\|}(\bx_j-\bx),
\end{equation}
with 
$$z = \frac1{\sqrt{2\sigma_n^2(\bx_j)}} \left(L\|\bx_j-\bx \| - M + \mu_n(\bx_j) \right).$$
Then, it holds that
\begin{eqnarray}\nonumber
\nabla \ln  \tilde{\alpha}_{t,k}(\bx, \mathcal{I}_{t,0}) & =  & [ g(\alpha(\bx; \mathcal{I}_{t,0}))^{-1} \\\nonumber
& & \frac{d }{d\alpha(\bx; \mathcal{I}_{t,0})}g(\alpha(\bx; \mathcal{I}_{t,0})) ] \nabla\alpha(\bx; \mathcal{I}_{t,0})\\ \nonumber
& + &\sum_{j=1}^{k-1}\varphi(\bx;\bx_{t,j})^{-1} \nabla \varphi(\bx;\bx_{t,j})\nonumber
\end{eqnarray}
where $\nabla \alpha(\bx; \mathcal{I}_{t,0})$ is the (assumed known) gradient of the original acquisition function. In cases in which the acquisition is already positive it is natural to choose $g(z) = z$ and the gradient of $ \ln  \tilde{\alpha}_{t,k}(\bx, \mathcal{I}_{t,0})$ reduces to
\begin{eqnarray}\nonumber
\nabla \ln \tilde{ \alpha}_{t,k}(\bx, \mathcal{I}_{t,0}) &= & \alpha(\bx; \mathcal{I}_{t,0})^{-1} \nabla \alpha(\bx; \mathcal{I}_{t,0}) + \\ \nonumber
& =&\sum_{j=1}^{k-1}\varphi(\bx;\bx_{t,j})^{-1} \nabla \varphi(\bx;\bx_{t,j}).
\end{eqnarray}
 When $\alpha(\bx; \mathcal{I}_{t,0})$ is not necessarily positive one can take $g(z) = \exp(z)$ and the gradient simplifies to
 $$ \nabla \ln  \tilde{\alpha}_{t,k}(\bx, \mathcal{I}_{t,0}) = \nabla \alpha(\bx; \mathcal{I}_{t,0}) + \sum_{j=1}^{k-1}\varphi(\bx;\bx_{t,j})^{-1} \nabla \varphi(\bx;\bx_{t,j}).$$
When $g(z) = \ln(1+e^z)$ the gradient is
 \begin{eqnarray}\nonumber
 \nabla \ln  \tilde{\alpha}_{t,k}(\bx, \mathcal{I}_{t,0}) & =& \frac1{\ln(1+e^ {\alpha(\bx; \mathcal{I}_{t,0})})} \frac{e^{\alpha(\bx; \mathcal{I}_{t,0})}} {1+e^{\alpha(\bx; \mathcal{I}_{t,0})}} \cdot\\ \nonumber
 & &  \nabla \alpha(\bx; \mathcal{I}_{t,0}) + \sum_{j=1}^{k-1}\varphi(\bx;\bx_{t,j})^{-1}  \cdot\\ \nonumber
  & & \nabla \varphi(\bx;\bx_{t,j}).\nonumber
 \end{eqnarray}

\section{Lipschitz constant approximation}\label{appen:approximation_L}
In this section we elaborate in the approximation of the Lipschitz constant. First, we include the following proposition that allows to uniquely identify a valid value of $L$. 

\begin{proposition}
Let $f: {\mathcal X} \to \bbbr$ be a L-Lipschitz continuous function defined on a compact subset ${\mathcal X} \subseteq \bbbr^d$. Take 
$$L_p = \max_{\bx \in\mathcal{X}}\|{\nabla f(\bx)}\|_p,$$
where $\nabla f(\bx) = \left(\frac{\partial f}{\partial \bx_1},\cdots, \frac{\partial f}{\partial \bx_p} \right)^\top$.
Then, $L_p$ is a valid Lipschitz constant such that the Lipschitz condition  
$$|f(\bx_1) - f(\bx_2) | \leq L_p \|\bx_1 -\bx_2 \|_q,$$
where  $\frac{1}{s} + \frac{1}{l} =1$, holds.
\end{proposition}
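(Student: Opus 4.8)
The plan is to reduce the desired global bound to a pointwise bound on the gradient via the fundamental theorem of calculus along line segments, and then to turn the resulting directional derivative into a norm estimate through H\"older's inequality with conjugate exponents (the pair written $s,l$ in the statement, i.e.\ $\frac1p+\frac1q=1$). First I would fix $\bx_1,\bx_2\in\mathcal{X}$ and consider the segment $\gamma(t)=\bx_1+t(\bx_2-\bx_1)$, $t\in[0,1]$; assuming $\mathcal{X}$ is convex (or, as is standard in this literature, that $f$ extends to a convex neighbourhood of $\mathcal{X}$), this segment stays in the domain of $f$. Since the proposition writes $\nabla f$, I treat $f$ as differentiable, so $h(t)=f(\gamma(t))$ satisfies $h'(t)=\nabla f(\gamma(t))^\top(\bx_2-\bx_1)$ and
\begin{equation}\nonumber
f(\bx_2)-f(\bx_1)=\int_0^1 \nabla f(\gamma(t))^\top(\bx_2-\bx_1)\, dt .
\end{equation}

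Next I would take absolute values, move them inside the integral, and apply H\"older's inequality in $\bbbr^d$ to the integrand:
\begin{equation}\nonumber
\left| \nabla f(\gamma(t))^\top(\bx_2-\bx_1) \right|\le \|\nabla f(\gamma(t))\|_p\,\|\bx_2-\bx_1\|_q .
\end{equation}
Because $\bx\mapsto\|\nabla f(\bx)\|_p$ is continuous on the compact set $\mathcal{X}$, its maximum $L_p$ is finite and attained, hence $\|\nabla f(\gamma(t))\|_p\le L_p$ for every $t\in[0,1]$; integrating over $t$ yields $|f(\bx_2)-f(\bx_1)|\le L_p\|\bx_2-\bx_1\|_q$, which is exactly the claimed condition. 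To support the accompanying remark that $L_p$ is the minimal feasible constant, I would add one line: letting $\bx_2\to\bx_1$ along the direction that realizes the dual norm of $\nabla f$ at its argmax shows the inequality cannot hold with any constant strictly smaller than $L_p$, so $L_p$ is both valid and tight.

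I expect the genuine obstacle to be the regularity bookkeeping rather than the inequality chain. If one only assumes $f$ is $L$-Lipschitz (not $C^1$), then $\nabla f$ exists merely almost everywhere by Rademacher's theorem, so $L_p$ should be read as an essential supremum and the classical fundamental theorem of calculus must be replaced by its absolutely-continuous version, with a check that $\gamma$ avoids the null set of non-differentiability for (almost) every choice of endpoints. A smaller but necessary caveat is the convexity of $\mathcal{X}$: for non-convex $\mathcal{X}$ the straight segment may leave the domain, and one must then work with the intrinsic (geodesic) metric or simply state the convex-domain/extension assumption explicitly. I would flag both points and then carry out the argument above under the clean $C^1$, convex-domain hypotheses.
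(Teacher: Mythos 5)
Your argument is essentially the paper's: the paper applies the mean value theorem to get a point $\bw$ on the segment with $|f(\bx_1)-f(\bx_2)| = |\nabla f(\bw)^\top(\bx_1-\bx_2)|$, then H\"older's inequality and the bound $\|\nabla f(\bw)\|_p \le L_p$, which is the same chain as your FTC-plus-H\"older version (the integral form versus the single intermediate point is a cosmetic difference). Your added caveats are well taken --- in particular the paper's claim that ``$\bw \in \mathcal{X}$ by definition'' silently assumes $\mathcal{X}$ is convex, exactly the gap you flag --- but they do not change the substance of the proof.
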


\begin{proof}
Using the mean value theorem for every $\bx_1,\bx_2 \in \mathcal{X}$ there exist a $\bw =  \bx_1 + \beta \bx_2$, with $\beta \in (0,1)$ such that, 
$$|f(\bx_1) - f(\bx_2) | = | \nabla f(\bw) (\bx_1 -\bx_2)|.$$
By the Holder's inequality we have that
$$|f(\bx_1) - f(\bx_2) |  \leq \|{\nabla f(\bw)}\|_p \|\bx_1 -\bx_2 \|_q.$$
Since $\bw \in \mathcal{X}$ by definition, we have that
$$|f(\bx_1) - f(\bx_2) | \leq L_p \|\bx_1 -\bx_2 \|_q,$$
for $L_p = \max_{\bx \in\mathcal{X}}\|{\nabla f(\bx)}\|_p.$
\end{proof}

In order to test the empirical approximation of the Lipschitz constant detailed in Section 2.2 we use the Cosines function described in the experimental section of this work. The true $L_{\nabla}$ for this function  is  $8.808636$, that was calculated by maximizing  the norm of gradient of $f$ in a very fine grid.  We check the quality of our approximation to $L_{\nabla}$ for increasing sample size up to 50 observations, where the locations of the points are randomly selected along the domain of $f$ using a bivariate uniform distribution. The evaluations of $f$ at the selected locations were perturbed with Gaussian noise with standard deviations $\sigma=0,0.1,0.25$. In Figure \ref{figure:LCA} we show the results for 30 replicates of the experiment. The average approximation of L converges to the true $L_{\nabla}$, being this convergence slower when the evaluation errors increase.

\begin{figure}[h!]
	\centering
  \includegraphics[width=.45\textwidth]{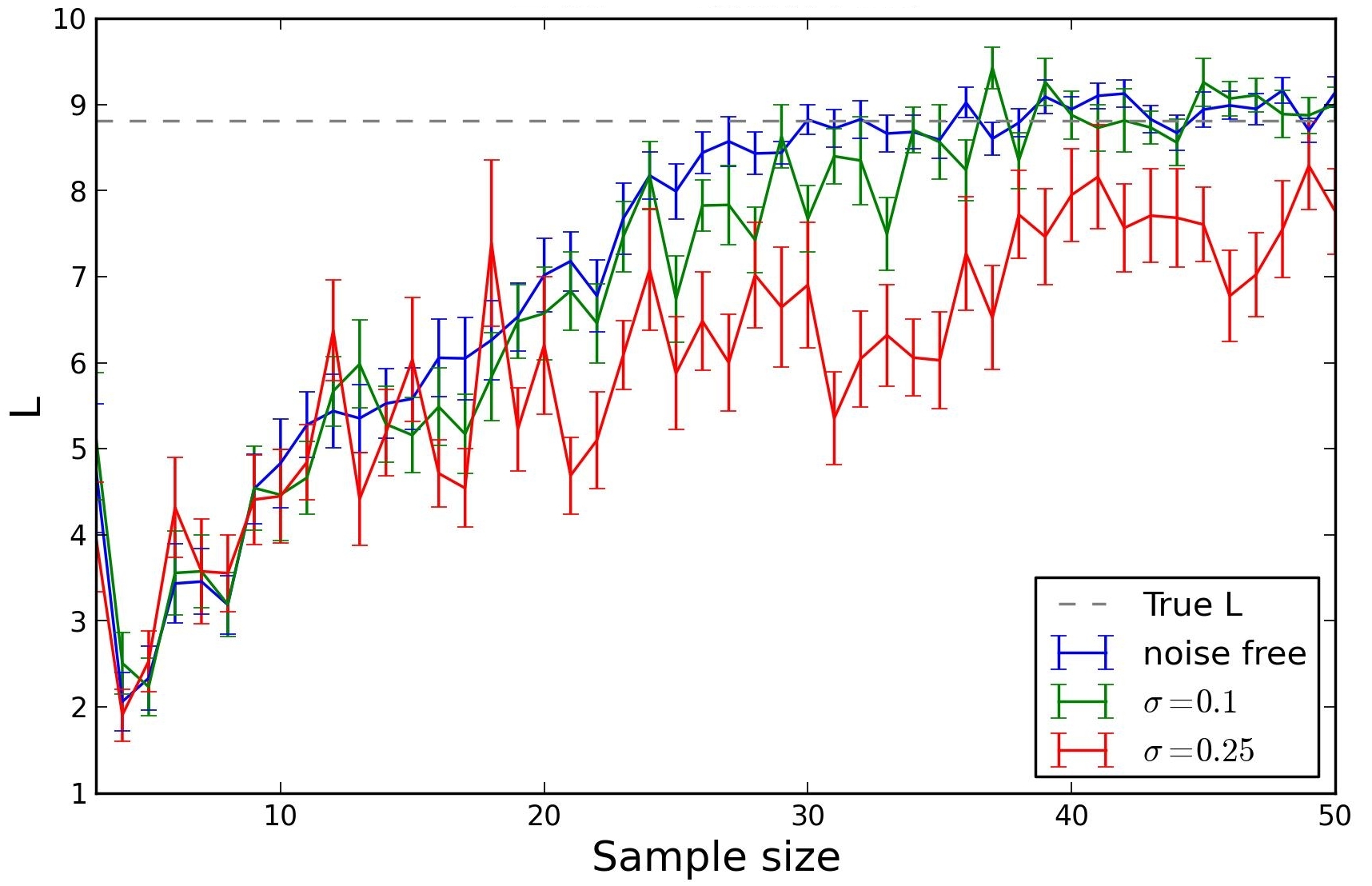}
\caption{Approximation of the Lipschitz constant in the cosines function using the GP-LCA method. We compare the convergence in the approximation for different noise levels  and increasing sample size. For each sample size show we show the average of 30 replications. Vertical bars represent the 95\% confidence interval for the average estimate. }
	\label{figure:LCA}
\end{figure}

\section{Detailed description of the experiments}\label{sec:functions}

\subsection{Synthetic functions}

Table~\ref{table:functions_test} contains the the details of the functions used in the experiments of this work. 

\begin{table}[t!]
\centering
\begin{tabular}{cccc}
 \hline 
Name & Function  &  $\mathcal{X}$  \\ \hline\hline
gSobol & $f(\bx) =\prod_{i=1}^d \frac{|4x_i-2|+a_1}{1+a_i}$  &  $[-5,5]^d$  \\
 Cosines &  $f(\bx) = 1- \sum_{i=1}^2 (g(x_i) - r(x_i) )$ &   $[0,5]^2$   \\
&{\small with $g(x_i) = (1.6x_i - 0.5)^2$,}  &  \\
& {\small  $r(x_i) = 0.3 \cos (3\pi (1.6 x_i -0.5))$.}  &  \\  \hline
\end{tabular}\caption{Functions used in the experimental section. All the parameters $a_i$ of the gSobol function are set to $a_i=1$ in the experiments.}\label{table:functions_test}
\end{table}

\subsection{Gene design experiment}

There is an increasing interest in the pharmacological industry in the the design of synthetic genes capable of transforming cells into `factories' able to produce drugs of interest. In this experiment we emulate a gene design process. 

The function to maximize is the production of cell proteins, that it is known depends on certain features of the gene sequences. We built a GP to link gene features and  protein production efficiency based the model described in \cite{gonzalez2014}.  A total of 71 gene features are considered, which correspond to the dimension of the final design space.  We validated the model with the remaining 2908 genes of the dataset and we used its posterior mean as the function to optimize.  We can understand this model as a mathematical surrogate of the cell behavior in which the mean evaluations play the role of physical wet-lab gene design tests, many of which can be run in parallel for the same price of one. 

\subsection{SVR parameter tuning experiment}

Support Vector Machines (SVR) for regression \cite{Drucker1997Support} with an EQ kernel, depend on three parameters: the kernel lengthscale ($\gamma$), the soft margin parameter ($C$) and the band size ($\epsilon$).  A proper choice of the parameters is crucial to guarantee a good performance of the SVR, which is typically done by minimizing the mean square error (RMSE) in a test dataset. This task can be expensive, specially for large datasets. We use BO to optimize the parameters of the SVR using the `Physiochemical' properties of protein tertiary structure' dataset available in the UCI Machine Learning repository \cite{Bache+Lichman:2013}.  This dataset has 45,730 instances and 9 continuous  attributes that are used to predict the coordinate root mean square distance (RMSD),  a measure that describes the distance per residue between to optimally aligned protein sequences. We trained the SVR using a randomly selected subset of 22,000 proteins and we tested the results of  using the rest. Every iteration takes around 300 seconds.

\end{document}